\definecolor{cvprblue}{rgb}{0.21,0.49,0.74}
\newcommand{\iidd}{\ \mathop{\sim}\limits^{iid}\ }
\newcommand{\Reals}{\mathbb{R}}
\newcommand{\Exp}{\mathop{\mathbb{E}}}
\newcommand{\Prob}{\mathop{\mathbb{P}}}
\newcommand{\One}[1]{\mathbbm{1} \Big[{#1}\Big]}
\renewcommand{\vec}[1]{\boldsymbol{#1}}
\newcommand{\Dphi}{\mathcal{D}_\phi}
\DeclareMathOperator{\sign}{sign}
\DeclareMathOperator*{\argmax}{arg\,max}
\newcommand{\landauO}{O}
\newcommand{\cra}{certified robust accuracy\xspace}
\newcommand{\onn}{1-nearest neighbor\xspace}
\newcommand{\comment}[1]{}
\newcommand{\Xref}[1]{\Cref{#1}}
\newtheorem{proofsketch}{Proof Sketch}
\renewcommand{\paragraph}[1]{\medskip\noindent\textbf{#1}}
\newcommand{\papertitle}{Intriguing Properties of Robust Classification}
\title{\papertitle}
\author{Bernd Prach, Christoph H. Lampert \\
Institute of Science and Technology Austria (ISTA) \\
Klosterneuburg, Austria\\
{\tt\small bprach@ist.ac.at, chl@ist.ac.at}
}
\newcommand\blankfootnote[1]{%
	\begingroup
	\renewcommand\thefootnote{}\footnote{#1}%
	\addtocounter{footnote}{-1}%
	\endgroup
}
\begin{document}
\maketitle

\blankfootnote{© 2025 IEEE. Personal use of this material is permitted. Permission from IEEE must be 
	obtained for all other uses, in any current or future media, including 
	reprinting/republishing this material for advertising or promotional purposes, creating new 
	collective works, for resale or redistribution to servers or lists, or reuse of any copyrighted 
	component of this work in other works.}

\begin{abstract} 
    Despite extensive research since the community learned about \emph{adversarial examples} 10 years ago, we still do not know how to train high-accuracy classifiers that are guaranteed to be robust to small perturbations of their inputs. Previous works often argued that this might be because no classifier exists that is robust and accurate at the same time. However, in computer vision this assumption does not match reality where humans are usually accurate and robust on most tasks of interest. We offer an alternative explanation and show that in certain settings robust generalization is only possible with unrealistically large amounts of data. Specifically, we find a setting where a robust classifier exists, it is easy to learn an accurate classifier, yet it requires an exponential amount of data to learn a robust classifier. Based on this theoretical result, we evaluate the influence of the amount of training data on datasets such as CIFAR-10. Our findings indicate that the amount of training data is the main factor determining the robust performance. Furthermore we show that there are low magnitude directions in the data which are useful for non-robust generalization but are not available for robust classifiers. 
	We provide code at \url{https://github.com/berndprach/IntriguingProperties}.
\end{abstract}
    
\section{Introduction}
\label{sec:intro}

Deep learning has proven useful in numerous computer vision tasks, however, there are still shortcomings that come with these large end-to-end trained models. In particular, most state-of-the-art models suffer from adversarial examples \cite{szegedy_2014_intriguing}, which are tiny perturbations of the input that can result in large changes to the output of such models. This phenomenon can negatively affect the trust of users in the models, it might constitute a security issue, and --because it contradicts human experience-- it makes it impossible to create faithfully interpretable models. 

Research on mitigation strategies has concentrated on three pillars: In \emph{adversarial training}~\citep{szegedy_2014_intriguing,adversarial_training_2015_goodfellow} adversarial examples are created during training and the models are trained to classify those examples correctly. This procedure makes it harder to find adversarial examples, however, it cannot guarantee that no adversarial examples exist. In contrast, \emph{randomized smoothing}~\citep{randomized_smoothing_2019_cohen} acts at prediction time. It mitigates the effect of adversarial inputs by repeatedly evaluating the network, each time with different noise added to the input. It then constructs a final prediction by combining the predictions, \eg by means of a majority vote. This construction provides probabilistic robustness guarantees, however, usually several thousand predictions need to be performed for each input, which results in an undesirable slowdown. Finally, \emph{Lipschitz networks}~\citep{parseval_networks_2017_cisse} prevent adversarial examples by constraining the network architecture such that only models with a small Lipschitz constant (typically equal to $1$) can be learned. As a consequence, any input perturbation cannot cause a change in the network's output of larger magnitude than the perturbation itself, which yields deterministic and overhead-free guarantees on the presence of adversarial examples for any given input. This makes Lipschitz networks currently the most practical method for robust learning with guarantees. 

Unfortunately, despite many years of research, robust networks still achieve results far worse than what one might hope for. Even on fairly simple datasets and for fairly small perturbations the robust accuracy is much worse than what we believe is possible. Furthermore, a recent large study~\citep{compared_2024_prach} indicated that even architectures and training techniques that differ strongly in terms of their memory and computational demands, ultimately achieve quite similar robust accuracy values. This suggests the presence of a more fundamental barrier for the development of robust networks than what could be addressed by gradual improvements. Several explanations of this phenomenon have been put forward. For example, it has been suggested that there might be a natural trade-off between robustness and accuracy~\citep{tsipras_2019}: this would imply that high robust accuracy is just impossible to achieve. Alternatively, the hypothesis has been put forward that robust networks are not expressive enough \cite{fawzi_2018,odds_2019_nakkiran} or that the computational overhead of training robust network is the limiting factor \cite{degwekar_2019_computational,bubeck_2019_adversarial}. At the same time, there exist also recent works that do report that higher robust accuracy is achievable if \emph{additional training data} is exploited~\citep{gowal2021improving,wang2023better,altstidl2023raising,hu_2024_unlocking,hu_2024_recipe}. This would suggest that the problem is fundamentally one of \emph{generalization}.

Overall, however, we still lack a solid understanding of what makes the task of robust classification difficult. Our main contributions in this work are three \emph{insights} that we hope will clear up some misconceptions and hopefully guide future research on training robust networks in new directions.

\paragraph{Insight 1: There are settings in which learning robust accurate classifiers requires much more data than learning just accurate classifiers.}
Specifically, we present a learning problem in which any learning algorithm requires an amount of training data exponential in the data dimension, otherwise it cannot learn a robust classifier that is better than chance level.
Our construction is based on the fact that non-robust classifiers are able to exploit low-magnitude features in the data, while robust classifiers have to rely on high-magnitude features. The exponential gap between robust and non-robust learning opens up when the former generalize well but the latter ones do not.

\paragraph{Insight 2: Also on real data, the amount of data is a major determinant of performance.}
We provide evidence that the problem of Insight 1 is not just theoretical, but happens in (less drastic) form also for real datasets. Specifically, for MNIST, CIFAR-10 and CIFAR-100 we demonstrate that 
increasing the size of the training data reliably increases robust performance.
We further show that linear subspaces of the input space exist that only contain a tiny amount of the variance of the data, so those directions cannot be used for robust classification. However, when projecting our data into those subspaces, we can still obtain great (non-robust) accuracy. This implies that enforcing robustness makes classification a strictly hard task on CIFAR-10, providing an explanation why robust classification requires more data.

\paragraph{Insight 3: Robust architectures can fit and generalize non-robustly}. 
Training robust models requires certain architectural choices that are different from standard networks. We show that this is not the reason for the lack of performance on test data. Architectures built for robust classification are expressive enough to robustly overfit the training data, and we can also learn classifiers that generalize well, we just struggle to learn robust classifiers that generalize well. 
%

\medskip

We believe these insights show how important the amount of training data is for robust classification. 
%
In the remainder of the paper, we state our insights more formally and report in detail on our theoretical and empirical findings.




\section{Background \& notation}
\subsection{Accuracy and Robust Accuracy}

Traditionally, in machine learning, our goal is to maximize the \emph{accuracy} of a classifier $f$. For a training or test set $S=\{(x_1,y_1),\dots,(x_n,y_n)\}$ it is given by
\begin{align}
    \operatorname{acc}(f) &= \frac{1}{n} \sum_{i=1}^{n}\One{f(x_i)=y_i}.
\end{align}
In contrast, in this paper we consider the accuracy on adversarial altered inputs. We want our classifiers to predict the correct class, even when an adversary is allowed to change the input by a small amount. In order to measure performance in this setting, 
we define the \emph{robust accuracy} of margin $\epsilon$ as
\begin{align}
     \operatorname{RA}(f) &= \frac{1}{n} \!\sum_{i=1}^{n} 
    \One{f(\tilde x)=y_i \quad  \forall \tilde x: \|\tilde x-x_i\|_2 \le \epsilon}   \!\!
    \label{eq:RA}
\end{align}
%
%
Generally, computing a network's robust accuracy is NP-hard~\citep{virmaux2018lipschitz}, and even approximations are hard to obtain~\citep{jordan2020exactly}.  Therefore, we usually put certain constraints on the classifier $f$. In this work, we will usually choose $f$ to be a 1-Lipschitz classifier.

We define a \emph{1-Lipschitz classifier} to be a function of the form 
\begin{align} \label{eq:ols-classifier}
    f(x)=\operatorname*{argmax}_{i=1,\dots,K}[g(x)]_i,
\end{align}
where $g$ is a 1-Lipschitz function with $K$-dimensional outputs and $[\cdot]_i$ denotes the $i$-th component of a vector.
Here, a function, layer or network $f$ is \emph{$1$-Lipschitz} if 
\begin{align}
    \|f(x) - f(y)\|_2 \le \|x-y\|_2
\end{align}
for all $x, y$,
where $\|\cdot\|_2$ denotes the Euclidean norm. 
%

\citet{tsuzuku_2018_lipschitz} proved that with 1-Lipschitz classifier, we can easily compute a robustness guarantee. 
For a 1-Lipschitz function $g$, and the classifier $f$ as defined in \Cref{eq:ols-classifier}, we have that 
the robust accuracy (\Cref{eq:RA}) is bounded below by the $\emph{certified robust accuracy}$, which we define as $\operatorname{CRA}(f) = $
\begin{align}
    \frac{1}{n}\!\sum_{i=1}^{n} \One{ 
        [g(x_i)]_{y_i} > \max_{c \neq y_i}[g(x_i)]_{y} + \sqrt{2}\epsilon
    }  
    .
    \label{eq:CRA}
\end{align}
%
Therefore, in this work we will use $\operatorname{CRA}$ as an efficient, yet conservative, proxy for a network's actual robust accuracy. Unless specified otherwise, we will use a perturbation radius $\epsilon=36/255$, as it is common in the literature. 

\subsection{1-Lipschitz networks}
In order to train a 1-Lipschitz classifiers, we require a way of parameterizing the 1-Lipschitz function $g$. We do this by parameterizing $g$ as a neural network, where every layer has the 1-Lipschitz property. There are many ways of creating 1-Lipschitz linear layers, in this work we will use two rescaling-based layers: AOL~\cite{aol_2022_prach} and CPL~\cite{CPL}. For details about those methods, 
see \eg \cite{compared_2024_prach}.
\section{Robust classification needs more data} \label{sec:theory}
We start our discussion by two observations.
First, deep learning has been so successful for many classical computer vision tasks 
that it has become a routine task to train classifiers on a training dataset 
in a way so that the classifier also performs well on unseen test data afterwards. 
Second, for many such tasks, classifiers of high \emph{robust} accuracy are provably possible.
Namely, the human visual system provides proof for this, as human perception is typically not just 
highly accurate (we use it to generate the ``ground truth'' of our datasets), but also robust, in the sense that it is unaffected by small perturbations of its input. 

It is tempting to assume that those two observations (classifiers learned from data generalize well, and high-accuracy robust classifiers do exist) imply that it is also possible to \emph{learn} a high-accuracy robust classifier. However, in the following we show that this conclusion does not hold.
%
Informally, we show that for any dataset size there exists a family of data distributions such that 
robust classification is possible, 
learning an accurate classifiers is easy, but
learning an accurate robust classifier is impossible.

Our result is formalized in the following Theorem.
\begin{restatable}[No Free Robustness]{theorem}{nofreerobustness} \label{thm:hardness}
For any dataset size $n$ there exists a family, $\mathcal{F}$, of binary classification problems 
such that the following 3 properties hold:
\begin{enumerate}
    \item For any $\mathcal{D} \in \mathcal{F}$, there exists a classifier with $100\%$ robust accuracy.
    \item There is a learning algorithm that for any $\mathcal{D} \in \mathcal{F}$ and $S\sim\mathcal{D}$ finds a (linear) classifier with $100\%$ test accuracy.
    \item For any learning algorithm, $\mathcal{L}$, on average over $\mathcal{D} \in \mathcal{F}$ and $S \stackrel{\iid}{\sim}\mathcal{D}$, the learned classifier $\mathcal{L}(S)$ achieves robust accuracy less than $51\%$ on $\mathcal{D}$.
\end{enumerate}




\end{restatable}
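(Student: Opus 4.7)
My plan is to construct a \emph{needle-in-a-haystack} family: an exponentially large pool of high-magnitude candidate directions, exactly one of which carries the true robust signal for a given distribution, together with a single low-magnitude coordinate that perfectly encodes the label but is trivially attackable. Fix $R > \sqrt{2}\epsilon$ and $\delta \in (0, \epsilon/2)$, and set $d = C \cdot 2^n$ for a constant $C$ to be chosen later. For each $i \in \{1, \dots, d\}$ define $\mathcal{D}_i$ on $\mathbb{R}^{d+1}$ by drawing $y$ uniformly from $\{\pm 1\}$, setting $x_0 = \delta y$, $x_i = Ry$, and $x_j$ uniformly from $\{\pm R\}$ independently for $j \in \{1, \dots, d\} \setminus \{i\}$. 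Take $\mathcal{F} = \{\mathcal{D}_1, \dots, \mathcal{D}_d\}$.

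Properties 1 and 2 are immediate from the construction. For property~1, the classifier $f(x) = \sign(x_i)$ is $\epsilon$-robust on $\mathcal{D}_i$ since $|x_i| = R > \epsilon$ prevents any sign flip under an $\epsilon$-perturbation, giving $100\%$ robust accuracy. For property~2, the data-independent algorithm that always returns the linear classifier $f(x) = \sign(x_0)$ attains $100\%$ accuracy on every $\mathcal{D}_i$ because $x_0 = \delta y$.

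For property 3 I would split the argument into two steps. First, a \emph{reduction to Boolean functions} on the high-magnitude signs $b_j = \sign(x_j)$: whenever $f$ is $\epsilon$-robust at a point $x \sim \mathcal{D}_i$, the zero-shifted point $x^0 = (0, x_1, \dots, x_d)$ lies within distance $\delta < \epsilon$ of $x$, so $f(x) = f(x^0) =: \tilde f(b_1, \dots, b_d)$, and thus $\operatorname{RA}(f; \mathcal{D}_i) \leq \Prob_b[\tilde f(b) = b_i]$ with $b$ uniform on $\{\pm 1\}^d$. Second, a \emph{Bayesian symmetry plus Fourier} step: by a direct likelihood calculation the posterior of the true index given $S$ is uniform on the set of label-consistent coordinates $M(S) = \{j : b_j^{(k)} = y^{(k)}\ \forall k\}$, and for any Boolean $\tilde f$,
\[
\frac{1}{|M(S)|} \sum_{j \in M(S)} \Prob_b[\tilde f(b) = b_j] = \frac{1}{2} + \frac{1}{2|M(S)|} \sum_{j \in M(S)} \hat{\tilde f}(\{j\}) \leq \frac{1}{2} + \frac{1}{2\sqrt{|M(S)|}},
\]
by Cauchy--Schwarz together with Parseval ($\sum_j \hat{\tilde f}(\{j\})^2 \leq 1$). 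Since $|M(S)| - 1 \sim \mathrm{Bin}(d-1, 2^{-n})$ has mean $\approx C$, a Chernoff bound gives $|M(S)| \geq C/2$ with probability $1 - e^{-\Omega(C)}$; choosing $C$ sufficiently large (e.g., $C \geq 20{,}000$) drives the expected robust accuracy below $0.51$ after taking the outer expectation over $S$.

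The main obstacle I anticipate is the reduction to Boolean functions. I need to argue precisely that on every correctly and robustly classified $x$ the classifier's prediction depends only on the high-magnitude signs, handling both the continuous geometry of the adversarial ball and the fact that the classifier may nontrivially combine magnitudes across all $d+1$ coordinates; the argument must also accommodate stochastic or non-1-Lipschitz learning algorithms so that the bound quantifies over \emph{all} $\mathcal{L}$. Once this reduction is in place, the Fourier calculation combined with the standard concentration bound on $|M(S)|$ closes the averaged-robust-accuracy claim routinely.
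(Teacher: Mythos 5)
Your proposal is correct, but it reaches property 3 by a genuinely different route than the paper. The paper keeps the dimension tiny ($d=\lceil\log_2 n\rceil+7$) and lets the label be an \emph{arbitrary} Boolean function $\phi$ of the $d-1$ high-magnitude sign features, drawn uniformly from all $2^{2^{d-1}}$ such functions; after the same ``zero out the low-magnitude coordinate'' attack that you use, the argument is a No Free Lunch computation: a fresh test point misses the training set except with probability $n/2^{d-1}$, and on an unseen point $\phi$ is an independent fair coin, so every learner is exactly $50\%$ correct there. Your construction instead hides a single relevant high-magnitude coordinate among $d=C\cdot 2^n$ candidates (the label function is a dictator rather than an arbitrary $\phi$), and you control the posterior over the hidden index with a Cauchy--Schwarz/Parseval bound $\frac12+\frac{1}{2\sqrt{|M(S)|}}$ plus a Chernoff bound on the number of label-consistent coordinates. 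Both arguments are sound, and the reduction step you flag as the main obstacle is actually immediate: the test point's high-magnitude coordinates are exactly $\pm R$ and the shift to $x^0$ has norm $\delta<\epsilon$, so robust correctness at $x$ forces $f(x^0)=y$, pointwise and for arbitrary (even randomized) learners after conditioning. The real trade-off is in what the constructions demonstrate beyond the literal statement: in the paper's family, robust learning requires $\Omega(2^d)$ samples, which is the ``exponential in the data dimension'' message of Insight 1; in yours, $O(\log d)$ samples already identify the hidden coordinate (since $\mathbb{E}|M(S)|\approx d\,2^{-n}$ collapses to $1$ once $n\gtrsim\log_2 d$), so the family is hard only because the dimension was chosen exponentially large in $n$. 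What your version buys in exchange is a quantitative, Fourier-analytic error bound and a robust classifier for property 1 that is simply $\operatorname{sign}(x_i)$, i.e., linear and (after rescaling) $1$-Lipschitz, rather than the paper's $\phi\circ\operatorname{sign}$.
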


\begin{proof}
	This proof consist of an explicit construction of a family of data distributions that fulfills the three conditions.
	
	\paragraph{Defining $\mathcal{F}$.} We first need to define our family of classification problems, $\mathcal{F}$. For that, we first set the data dimension to ${d = \lceil \log_2 n \rceil + 7}$. We denote the set of all binary functions on the $(d-1)$-dimensional hypercube as $\Phi$,
	\begin{align}
		\Phi = \big\{\phi: \{\pm1\}^{d-1} \rightarrow \{\pm1\}\big\}.
	\end{align}
	Note that this is a very large set with size $|\Phi| = 2^{2^{d-1}}$. For every $\phi \in \Phi$ we will define a data distribution $\Dphi$, then our  family of distribution is given as
	\begin{align}
		\mathcal{F} = \{ \Dphi: \phi \in \Phi\}.
	\end{align}
	In order to sample a pair $(\vec{x}, y)$ from $\Dphi$, we will sample $x_i$ uniformly from $\{+1, -1\}$ for $i = 1, \dots, (d-1)$. Then we will set 
	$x_d=\delta \phi(x_1, \dots, x_{d-1})$ for some small scalar $\delta$, and $y = \sign(x_d)$. 
	%
	Here, $x_1,\dots,x_{d-1}$ are \emph{robust} (large magnitude) features. Their relation to the ground truth label $y$ is deterministic ($y=\phi(x_1,\dots,x_{d-1})$), but it is hard to learn because of the size of $\Phi$.
	In contrast, $x_d$ is a useful, non-robust feature: It is perfectly correlated with the label, but because of its small magnitude it can be easily perturbed.
	%
	
	\paragraph{Proof of statement 1.} 
	For any $\Dphi \in \mathcal{F}$ with associated mapping $\phi$, consider the 
	classifier $f(x_1,\dots,x_{d}) = \phi\big(\operatorname{sign}(x_1,\dots,x_{d-1})\big)$, 
	where the $\operatorname{sign}$-function is applied componentwise.
	$f$ is robust against any perturbations of size $\epsilon<1$, because any such perturbation of the robust features is undone by the $\operatorname{sign}$ function, and it has perfect accuracy, because it coincides with the labeling function $\phi$. 
	
	\paragraph{Proof of statement 2.} 
	Consider a learning algorithm that always outputs the classifier $f(x_1,\dots,x_d)=\operatorname{sign}(x_d)$. Then, because $y=\operatorname{sign}(x_d)$ holds for all data distributions, it follows that $f$ has perfect accuracy on future data. 
	
	\paragraph{Proof of statement 3.} 
	The third property requires a slightly longer proof, resembling the \emph{No Free Lunch} theorem,
	\eg~\citep[Theorem 5.1]{shalev2014understanding}. Intuitively, it is based on the fact that a robust classifier cannot rely on the value of feature $x_d$, because that can be set to $0$ by a perturbation of size $\delta$. 
	Furthermore, the functional relation between $(x_1,\dots,x_{d-1})$ and $y$, is hard to learn because of the size of $\Phi$.

	\newcommand{\atkx}{\vec{\tilde{x}}}
	\newcommand{\aate}{average adversarial test error\xspace}
	\newcommand{\Xrob}{X^\text{r}}
	\newcommand{\xrob}{\vec{x^\text{r}}}
	In order to proof the statement, 
	first note that the average adversarial test error for perturbation of size $\le \delta$ is given as 
	\begin{align}
		\Exp_{\Dphi \in \mathcal{F}} \
		\Exp_{S \iidd \Dphi} \
		\Prob_{(\vec{x},y) \sim \Dphi}
		\left[ \mathop{\exists}_{\vec{x'} \in \mathcal{N}_\delta(\vec{x})} 
		\text{ s.t. } 
		\mathcal{L}(S)(\vec{x'}) \ne y 
		\right],
    \end{align}
    for $\mathcal{N}_\delta(\vec{x}) = \{\vec{x'}: \|\vec{x} - \vec{x'}\|_2 \le \delta\}$.
    We can get a lower bound to this quantity by considering only a single attack that sets the ``non-robust'' feature $x_d$ to $0$. We will write $\atkx$ for the result of applying this attack to an input $\vec{x}$. With this we can lower bound the \aate by
	\begin{align}
		\Exp_{\Dphi \in \mathcal{F}} \
		\Exp_{S \iidd \Dphi} \
		\Prob_{(\vec{x},y) \sim \Dphi}
		\mathcal{L}(S)(\atkx) \ne y.
	\end{align}
	Next we will rewrite sampling $\Dphi \in \mathcal{F}$ and $S \iidd \Dphi$ as sampling $\phi \in \Phi$, and once we know $\phi$, sampling from $\Dphi$ is equal to sampling the robust features from the hypercube $\{\pm1\}^{d-1}$, as the non-robust feature $x_d$ and the label depend deterministically on the robust features. We can furthermore change the order of sampling the robust features and sampling $\phi \in \Phi$. We will write $\Xrob$ and $\xrob$ for these robust features. Using this, the lower bound on the \aate becomes: 
	\begin{align}
		\Exp_{\Xrob} \
		\Exp_{\xrob} \
		\Exp_{\phi \in \Phi} \
		\One{\mathcal{L}\big(\Xrob,\ \phi(\Xrob)\big)(\vec{\tilde{x}^\text{r}}) \ne \phi(\xrob)}
	\end{align}
	Now note that when $\xrob \not\in \Xrob$, then $\phi(\xrob)$ becomes independent of $\phi(\Xrob)$. Therefore, since we assumed a uniform distribution on $\Phi$, any learner will be correct exactly $\frac12$ of the time.
	When $\xrob \in \Xrob$, any reasonable learner should be able to predict correctly, we will bound the error in this case by $0$.
	Using this, the lower bound on the \aate becomes
	\begin{align}
		\Exp_{\Xrob} \ \Exp_{\xrob} \ \One{\xrob \not\in \Xrob} \frac12.
	\end{align}
	Now the probability that $\xrob \in \Xrob$ is at most $\frac{n}{2^{d-1}}$, so we know that the \aate is at least $\frac12 - \frac{n}{2^d}$, and therefore at least $49\%$ by our choice of ${d = \lceil \log_2 n \rceil + 7}$.
\end{proof}

\Cref{thm:hardness} establishes a lower bound on the worst case behavior, by showing that in certain settings we do require exponentially many data points (exponentially in the dataset dimension). Next, we provide a matching upper bound: As long as the input domain is bounded and some robust classifier exists, exponentially many data points suffice to learn an accurate and robust classifier. 
For this we do need to assume that there exists a robust classifier that is robust to perturbations with bounded $L_\infty$ norm. Note that here is the only part of the paper where we use $L_\infty$ norm, everywhere else we assume $L_2$ distances.
More precisely:

\begin{restatable}{theorem}{upperbound} \label{thm:upperbound}
    Assume that there exists a $L_\infty$ robust classifier (margin $\delta$) on data distribution $\mathcal{D}$, where the data points are in $[0, 1]^d$. Then as long as we have 
    $n \ge 37 \left\lceil \frac1\delta \right\rceil^d$ training points independently sampled from $\mathcal{D}$, 
    for margin $\delta/2$,
    we can achieve average $L_\infty$ robust test accuracy
    of at least $99\%$
    (average over sampling training sets).
\end{restatable}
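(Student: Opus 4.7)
The plan is to use the $1$-nearest-neighbor classifier in $L_\infty$ and to invoke the assumed $\delta$-robust classifier in two places: first to turn ``some training point lies near $\vec{x}$'' into a sufficient condition for local $\delta/2$-robustness, and second to bound the probability that this condition fails.

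I would start with the following geometric lemma: $1$-NN is $\delta/2$-robust at $(\vec{x},y)\sim\mathcal{D}$ as soon as some training point $\vec{p}$ satisfies $\|\vec{p}-\vec{x}\|_\infty\le\delta$. Given any adversarial perturbation $\vec{x}'$ with $\|\vec{x}'-\vec{x}\|_\infty\le\delta/2$ and its nearest training point $\vec{q}$, two triangle inequalities yield $\|\vec{q}-\vec{x}'\|_\infty\le\|\vec{p}-\vec{x}'\|_\infty\le 3\delta/2$ and hence $\|\vec{q}-\vec{x}\|_\infty\le 2\delta$. The assumed $\delta$-robust classifier must then equal $y$ on the closed $L_\infty$-ball of radius $\delta$ around $\vec{x}$ and equal $y_{\vec{q}}$ on the analogous ball around $\vec{q}$ (both are data points); those two cubes overlap precisely because $\|\vec{q}-\vec{x}\|_\infty\le 2\delta$, which forces $y_{\vec{q}}=y$ and hence a correct $1$-NN prediction at $\vec{x}'$.

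Next I would estimate how often that sufficient condition fails. Failure at $\vec{x}$ requires \emph{all} $n$ training samples to avoid the $L_\infty$-ball of radius $\delta$ around $\vec{x}$. To turn this into a bound that depends only on $n$ and $\lceil 1/\delta\rceil^d$, I partition $[0,1]^d$ into $N:=\lceil 1/\delta\rceil^d$ axis-aligned cubes of side $\delta$; the critical containment is that for any $\vec{x}\in C$ the whole cube $C$ already lies inside the $L_\infty$-ball of radius $\delta$ around $\vec{x}$. Writing $p_C$ for the $\mathcal{D}$-mass of $C$, this gives
\begin{align}
\Prob_{S,\vec{x}}[\text{failure}] \;\le\; \sum_{C} p_C(1-p_C)^n.
\end{align}
A short Lagrange/symmetry argument shows that the right-hand side, subject to $\sum_C p_C=1$, is maximized by the uniform distribution $p_C\equiv 1/N$, with value $(1-1/N)^n\le e^{-n/N}$. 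For $n\ge 37N$ this is at most $e^{-37}$, many orders of magnitude below $0.01$, and therefore an average robust test accuracy of at least $99\%$ follows.

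The hard part will be the geometric setup in the first step: noticing that for $1$-NN the correct ``coverage radius'' around a test point is $\delta$ rather than the naively expected $\delta/2$. This factor of two is precisely what allows a single cube of side $\delta$ to fit inside the coverage ball and is ultimately responsible for the sample complexity $\lceil 1/\delta\rceil^d$ instead of $\lceil 2/\delta\rceil^d$. Once that radius is correctly identified, the remaining ingredients---the two triangle inequalities, the simplex maximization, and the per-point-to-average reduction---are essentially mechanical.
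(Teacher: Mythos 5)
Your geometric setup is exactly the paper's: the $1$-nearest-neighbor classifier, the observation that a training point within $L_\infty$ distance $\delta$ of the test point guarantees $\delta/2$-robustness (via the two triangle inequalities and the overlap of the $\delta$-balls on which the assumed robust classifier is constant), and the partition of $[0,1]^d$ into $N=\lceil 1/\delta\rceil^d$ cubes of side $\delta$. All of that is correct and matches the paper.

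The gap is in the final probabilistic step. The claim that $\sum_C p_C(1-p_C)^n$ subject to $\sum_C p_C=1$ is maximized by the uniform distribution is false: the summand $g(p)=p(1-p)^n$ is not concave (it increases up to $p=1/(n+1)$ and then decreases), so the symmetric critical point need not be the maximizer, and in fact it is nowhere near it. Concretely, take $N=2$ and $n=74$: the uniform choice gives $2^{-74}\approx 5\cdot 10^{-23}$, whereas $p_1=1/75$, $p_2=74/75$ gives $\tfrac{1}{75}(74/75)^{74}\approx 0.0049$. The worst case is attained by placing mass $\approx 1/(n+1)$ in as many cells as possible, and it is of order $N/(en)$, not $e^{-n/N}$. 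The correct (and essentially tight) bound is the one the paper uses: $p_C(1-p_C)^n \le p_C e^{-np_C} = \tfrac{1}{n}(np_C)e^{-np_C}\le \tfrac{1}{en}$ for each cell, since $xe^{-x}\le e^{-1}$, hence $\sum_C p_C(1-p_C)^n \le \tfrac{N}{en}\le \tfrac{1}{37e}\approx 0.00994<0.01$. Note how thin the margin is --- this is exactly why the theorem's constant is $37>100/e\approx 36.8$; your claimed bound of $e^{-37}$ would have permitted a constant of $5$, which should have been a warning sign. With this one step repaired, the rest of your argument goes through and coincides with the paper's proof.
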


\begin{proofsketch}
    We will prove that in the setting above, the \onn classifier will get $99\%$ robust accuracy. In order to show this, we first show that for any test point, the nearest point of a different class is at least $2\delta$ away. Therefore, if there is a training point within distance $\delta$ of a test point, no perturbation of size at most $\delta/2$ applied to the test point can change the prediction of the \onn algorithm. Finally, we show that the probability (over sampling training set and test point) of having a training example within $\delta$ is $\ge 99\%$.
\end{proofsketch}


For the full proof see \Xref{sec:proof2}.
Note that we can adapt the proof to work for any margin $\delta' < \delta$, and not just for $\delta/2$. Furthermore, if we only assume $L_2$ robustness we might need many more data points, namely $\mathcal{O}(c^d d^{d/2})$ for some constant $c$.

\section{Experimental setup}\label{sec:setup}
In order to gather evidence towards quantifying the behavior of robust classifiers on datasets such as MNIST and CIFAR-10, we train some robust and standard (non-robust) models. We provide some background and describe architectures and training setup below.

\paragraph{SimpleConvNet.} In order to obtain simple models that achieve good accuracy we rely on the \emph{SimpleConvNet} \cite{SimpleConvNet_2024_Prach,wind_2022_cifar94}. It consists of $8$ convolutional layers and one linear layer. Each convolutional layer uses \emph{BatchNorm} \cite{ioffe_2015_batch_norm} and ReLU as the activation function. The model uses \emph{MaxPooling} in order to reduce the resolution in the forward pass and also as a global pooling before the linear layer. In order to calculate the loss, we first apply the \emph{Softmax} function with temperature $\frac18$ to the predicted class scores, and then use \emph{CrossEntropy}.

\paragraph{1-Lipschitz models.}
For the robust 1-Lipschitz models, we either use an 8-layer MLP or the ConvNet architecture from \cite{compared_2024_prach}. It constraints every single layer to be 1-Lipschitz, therefore the whole network is 1-Lipschitz as well. The architecture first concatenates channels with value $0$ to the input, so that the total number of channels becomes $64$. Then it applies $5$ blocks with $3$ convolutional layers each followed by a 1-Lipschitz linear layer. 
As 1-Lipschitz linear layers we use AOL \cite{aol_2022_prach} or CPL \cite{CPL}. 
As common in 1-Lipschitz networks, the architecture uses MaxMin \cite{maxmin_2019_anil} as the activation function. As down sampling it uses \emph{PixelUnshuffle}. 
Unless mentioned otherwise, the loss function we use is \emph{OffsetCrossEntropy} \cite{aol_2022_prach}, with offset and temperature both set to $\frac14$. The only hyperparameter we tune is the peak learning rate, we train models with different learning rates for $100$ epochs each, and pick the learning rate of the model with the highest \cra on a validation set.
For evaluation we usually train for 3000 epochs.

\paragraph{Randomized Smoothing} We also estimate the robust performance of a \emph{Randomized Smoothing} classifier \cite{randomized_smoothing_2019_cohen}. Based on a classifier $f: \Reals^d \rightarrow [C]$, for $C$ the number of classes, the \emph{smoothed classifier} $h$ is defined as
\begin{align}
	h(\vec{x}) = \argmax_{c \in [C]} \mathbb{P}(f(\vec{x}+\vec{\epsilon}) = c),
\end{align}
where $\vec{\epsilon}$ follows a certain multivariate Gaussian distribution: $\vec{\epsilon} \sim \mathcal{N}(\vec{0}, \sigma^2 I)$ for some fixed standard deviation $\sigma$. Suppose $f(\vec{x}+\vec{\epsilon})$ returns the two most likely classes with probabilities $p_1$ and $p_2$. Then, its smoothed classifier is robust to perturbations of size $\frac{\sigma}{2} \left( F_\text{G}^{-1}(p_1) - F_\text{G}^{-1}(p_2) \right)$, for $F_\text{G}^{-1}$ the inverse of a standard Gaussian cumulative distribution function.

Unfortunately, we cannot evaluate this classifier.\!~\footnote{When $f$ is a piecewise-affine classifier, we could in theory evaluate the probability. This is usually not possible in practice though.} However, we can approximate it by sampling $\vec{\epsilon}$. Furthermore, we can also estimate the robust performance of this (theoretical) classifier. Since in this chapter we are purely interested in estimating robust performance, we directly report this approximation in our results. 

We use a SimpleConvNet for the base classifier $f$ and during training we add Gaussian noise to the images in addition to the data augmentation described in \Cref{sec:optimization}. We found that setting the standard deviation $\sigma$ to $\frac18$ and training for $100$ epochs gave us good results on a validation set, therefore we used this values for our evaluation runs. We approximate the class probabilities by sampling $\vec{\epsilon}$ $1000$ times.

\paragraph{Optimization.} \label{sec:optimization}
We train all our models using SGD with Nesterov momentum of $0.9$ and batch size of $256$ with \emph{OneCycleLR} as a learning rate scheduler. As data pre-processing we subtract the training data mean from every channel, we do not rescale the data. We use the same data augmentation as \cite{wind_2022_cifar94,SimpleConvNet_2024_Prach}. It consists of random crops, random flips and setting a random patch of the image to zero.

\section{Experimental results}

\subsection{Robust scaling behavior} \label{sec:scaling-laws}

Recent work on robust image classification has shown that additional data can greatly increase robust accuracy on CIFAR-10. Also, in \Cref{sec:theory} we have shown that the amount of training data can be an important limiting factor for robust classification. Therefore, in this section, we want to explore how the size of the training data influences the performance of a robust classifier trained on real data.

\begin{figure}[t]
    \centering
    \includegraphics[width=0.9\linewidth]{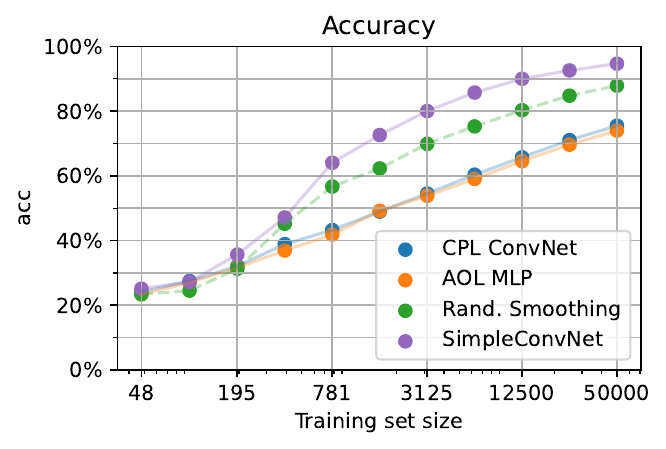}
    \includegraphics[width=0.9\linewidth]{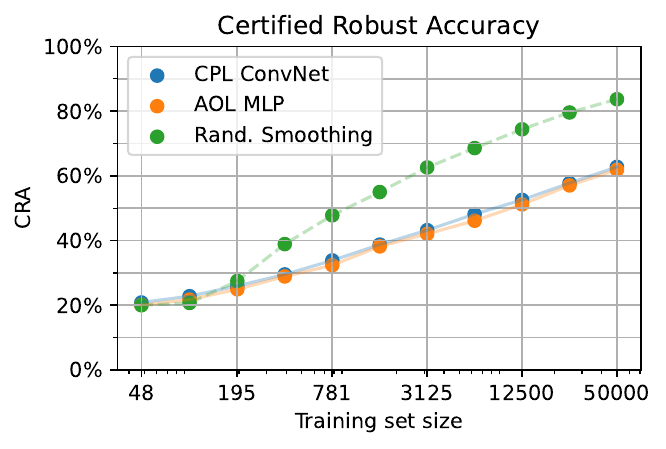}
    \caption{
        Accuracy (top) and \cra for ${\epsilon=36/255}$ (bottom).
        Training on subsets of CIFAR-10.
    }
    \label{fig:subsampling}
\end{figure}

%
We generated datasets of different sizes by sub-sampling the training partition of existing datasets such as CIFAR-10. We evaluated the performance of models trained on those smaller datasets.
%
In order to keep the amount of compute the same for all settings, when we divide the dataset size by some value $k$ we also multiply the number of epochs by $k$.
The results can be found in \Cref{fig:subsampling}.
We found that increasing the size of the dataset size does indeed make a big difference for the (test) performance of these models, and doubling the size of the dataset seems to reliably increase the \cra by about $5\%$. 

We repeated the experiments on MNIST and CIFAR-100, see \Cref{fig:scaling-behavior-other} for the results. Considering all results together, we can nicely see that the \cra follows a sigmoid curve. 
No matter how little data, we can always robustly classifier at chance level. This is visible on CIFAR-100, as the curve starts out almost flat for little training data. With more data, the \cra increases about linearly with the logarithm of the amount of training data. We can never get above $100\%$ \cra, so for larger dataset sizes the curve flattens again, which can be seen in the MNIST experiment.

\begin{figure}[t]
    \centering
    \includegraphics[width=0.9\linewidth]{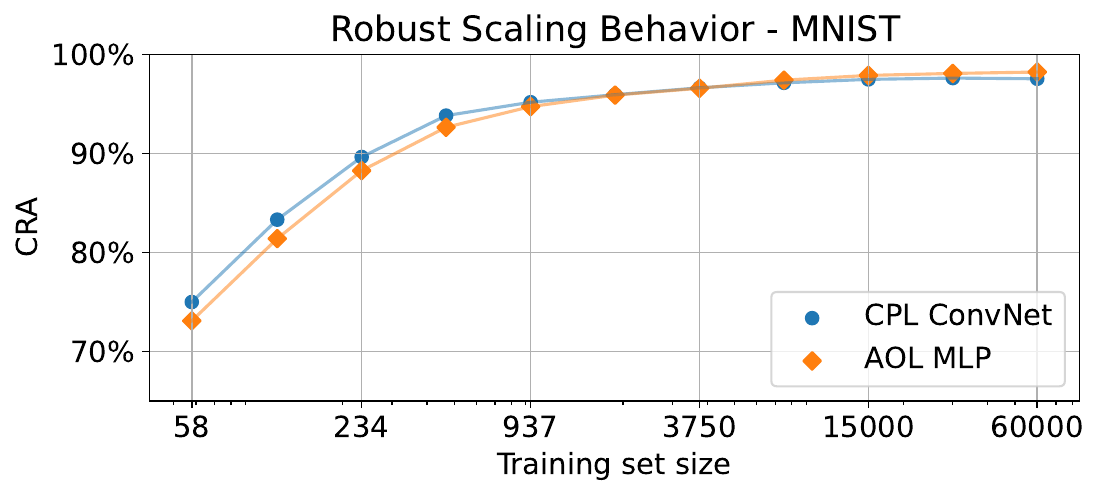}
    \includegraphics[width=0.9\linewidth]{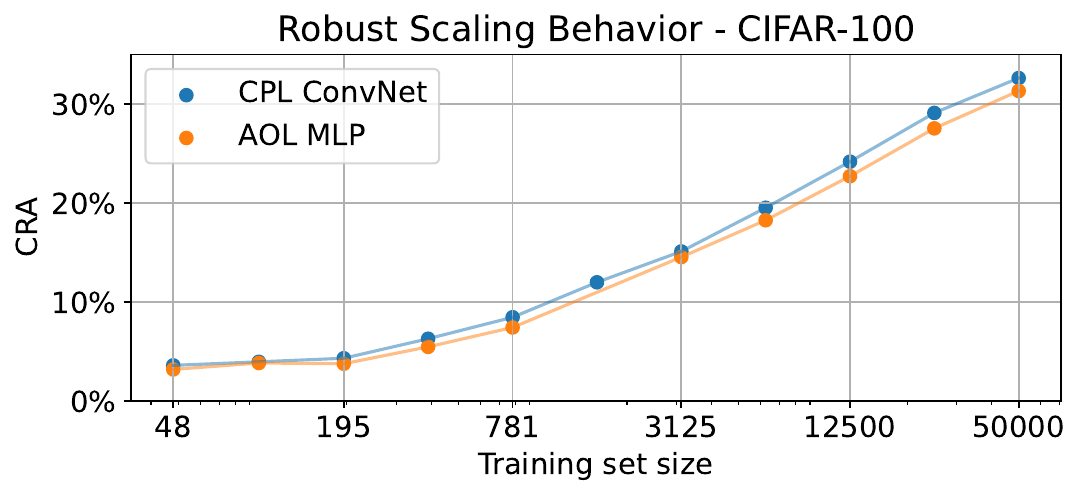} \caption{Scaling behavior on MNIST (top) and CIFAR-100 (bottom).}
    \label{fig:scaling-behavior-other}
\end{figure}

Interestingly, we also observe that across dataset, when training long enough, the convolutional architecture and the MLP have a very similar performance. It does seem that for robust classification, the inductive biases from the convolutional architecture are not very helpful, which is a big difference to non-robust classification.

We did a similar set of experiments to evaluate the influence of the amount of compute. The results are in \Xref{fig:scaling-compute}. Whilst increasing only the compute does also improve the performance, it has much less of an effect, and the curves flatten out with increasing compute.
%
Note that of course with more data, additional compute will be more useful. 
So we do expect that ideally we scale up both, dataset size as well as compute.

Recently, different pieces of work \cite{gowal2021improving,wang2023better,altstidl2023raising,hu_2024_unlocking,hu_2024_recipe} have used additional data in the style of CIFAR-10 generated by a diffusion model in order to improve the performance of robust classifiers. We also conduct scaling experiments on additional data, and find that the scaling nicely extends to larger sizes. See \Cref{sec:additional-scaling} for results.


\subsection{Robust and non-robust features} \label{sec:features}


In our theoretical section we have established that robust classification can be much harder than non-robust classification, which is also what we observe in experiments. In our theoretical example this hardness comes from 
a feature of small magnitude and high predictive power.
In this section we aim to evaluate whether CIFAR-10 has similar properties: We want to know whether there is a subspace of the input space, such that the data has very low variance when projected onto that subspace, yet the projection is useful for classification. 
It turns out that it is the case. For example, we can find such a subspace by considering the principal components \cite{pearson_1901_pca} of the dataset. The principal components of a data set are orthogonal basis vectors, such that principal component $i$ maximizes the variance of the data that lies in the subspace spanned by the first $i$ principal components. For visualizations of the first principal components of CIFAR-10 as well as the variance explained by subsets of principal components see \Xref{sec:x-features}.

For our experiments we flatten the training images in order to evaluate the principal components. Then in different experiments we project the flattened (train and test) images onto different subsets of principal components. After the projection, we transform the vectors back into the image space, so that we can train a standard and a 1-Lipschitz convolutional CPL network on the data without modifications to the setup.

\begin{figure}
    \centering
    \includegraphics[width=0.9\linewidth]{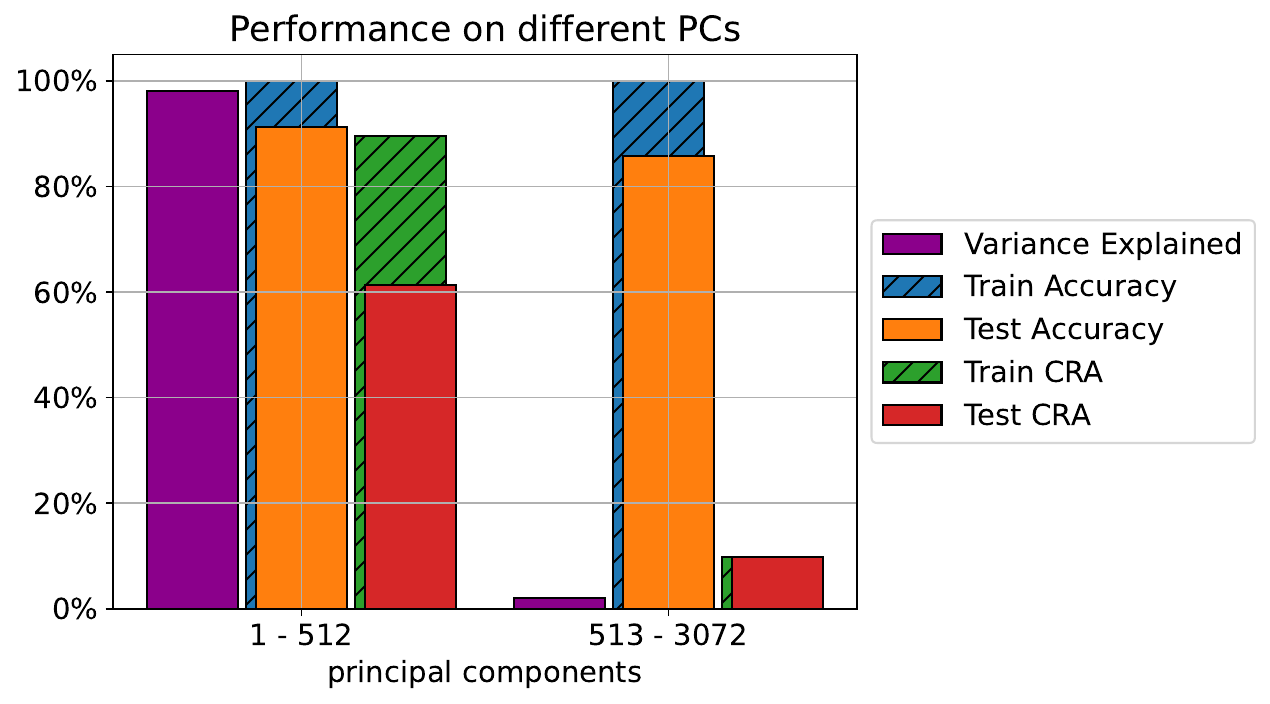}
    \caption{Performance on different subsets of the principal components.}
    \label{fig:pca-bars}
\end{figure}

Find our results in \Cref{tab:pca-results}, \Cref{fig:pca-bars} and \Cref{fig:pca-bars-extended}. 
%
It turns out that when considering the subspace spanned by all but the first 512 components, only about $2\%$ of the data variance are in this subspace. However, when training a standard network on this data, we obtain about $85\%$ test accuracy. Similarly, when projecting on the last 1024 (out of 3072) components, we get only $0.02\%$ of the variance, 
not enough to allow any robust classification, not even on the training data. However, we trained a standard network to achieve about $39\%$ accuracy on the test set, 
so there is still some weak signal in the last principal component, a signal that we cannot use for robust classification due to its low magnitude.

From this we conclude that there are in fact low magnitude directions on CIFAR-10 that are useful for classification, yet because of the small magnitude they are not useful for robust classification. This is a property that CIFAR-10 shares with our example from \Cref{sec:theory}.

We are also interested in the principal components of high variance.
It turns out that the first principal components are not very useful for generalizing. When using only the first 16 principal components for example, we do get about $72\%$ of the total variance,
and we can fit standard networks to get $\ge 99\%$ training accuracy. Furthermore we can also train 1-Lipschitz ConvNets to get good certified robust training accuracy 
($64\%$ with augmentation, $97\%$ without)
on this low-dimensional subspace.
%
%
However, the standard convolution network only achieves about $43\%$ accuracy on the test set, 
suggesting that there is only a weak signal in those features despite high variance.

Based on this observation, we created another dataset which contains the PCA components 1--16 together with the components 513--3072, that is, high magnitude and low magnitude features but not the intermediates. This data suffices for non-robust learning (86\% test accuracy), but robust learning fails (35\% robust test accuracy). We take this result as an indication that CIFAR-10 as a real dataset shares some characteristics with the hypercube example in \Cref{thm:hardness}: it contains high-magnitude features, which do not allow generalizing, and features of tiny magnitude, which generalize, but which no robust classifier is not able to exploit. 
%

\subsection{Robust overfitting} \label{sec:overfitting}
Previous works have suspected that the lack of robust performance might be due to underfitting: Our models might not have enough capacity to fit the data robustly. In our next experiment we will show that this is not the case, we can train a 1-Lipschitz networks to perfectly fit CIFAR-10, and do so in a robust way.
%
%
We train an CPL ConvNet and an AOL MLP.
In order to overfit robustly, we set the offset in our loss function to $\sqrt{2}$, and we train without augmentation for $3000$ epochs. 

We show the results in \Cref{fig:overfitting}. 
%
First note that we can clearly fit the training data robustly. For the MLP, even for perturbations of size $1$, we get almost perfect \cra on the training set. However, it is also visible that the classifiers do not generalize well. The performance on the test set is much worse for any perturbation size. 
Notably, robust overfitting does seem to require training for a lot of epochs. 

Importantly, this result does not imply that our models have enough capacity to fit the data distribution robustly, only that they have enough capacity for the amount of training data we currently have. We believe that when scaling up to amount of training data by a few magnitudes, of course we will benefit from larger models. However, the results in \Cref{fig:overfitting} shows that the capacity is not the current bottleneck.


\begin{figure}
    \centering
    \includegraphics[width=0.8\linewidth]{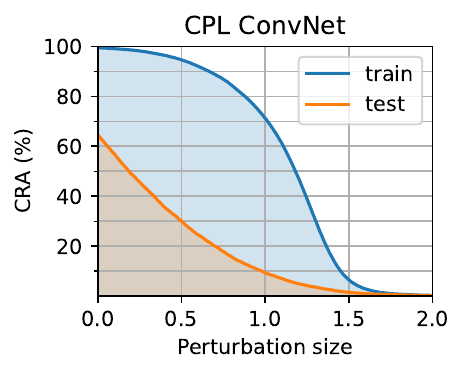}
    \includegraphics[width=0.8\linewidth]{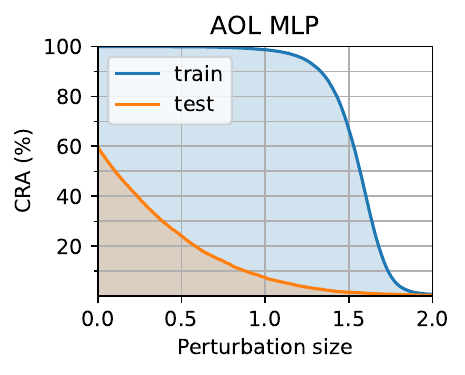}
    \caption{
        We can robustly overfit the CIFAR-10 training set with a CPL ConvNet (top) and an AOL MLP (bottom).
    }
    \label{fig:overfitting}
\end{figure}

\subsection{Robust architectures can generalize} \label{sec:trade-off}
In this final experimental section we want to explore whether it is the model architecture that prevents robust models from generalizing.
In order to prevent vanishing gradients and other problems when training 1-Lipschitz networks, there are a few important adaptation from standard convolutional networks.
Apart from the different linear layers, in 1-Lipschitz model we use different activation layers, no MaxPooling, no BatchNorm and different initialization strategies, see also \Cref{tab:arch-diff}.

In order to evaluate whether the difference in architecture (\eg the lack of global pooling in 1-Lipschitz networks) is the reason for the worse generalization, we carefully constructed a single architecture that can be trained to either competitive accuracy ($93.2\%$) or competitive \cra ($61.7\%$), depending on the choice of loss function.
%
%
This shows that 1-Lipschitz architectures can achieve compareable accuracy to traditional ConvNets, so we do not require layers such as MaxPooling for generalization.
We report details and results in \Cref{sec:x-rat}.


\section{Related work}


We will first describe related work on \textbf{robust generalization}.
One closely related piece of work is \cite{schmidt_2018_more_data}. In their work, the authors show that even in a very simple example on Gaussian distributions, robust classification can require many more training examples than non-robust classification. 
In their example, we can construct an accurate classifier from just a single training example, but for $\epsilon$-robust classification with $L_\infty$-norm we need about $\Omega(\epsilon^2\sqrt{d}/\log(d))$ examples, for $d$ the data dimension.\!\footnote{Here, $\Omega$ is the Bachmann–Landau notation: for functions $f$ and $g$, we write $f(d) = \Omega(g(d))$ if for some $M$ and $d_0$ it holds that $|f(d)| \ge M |g(d)|$ for all $d \ge d_0$.}
%
%
%
%
%
%
This is an very interesting results, that could explain part of the gap in performance between standard classification and robust classification. 
Our paper differs in that we are mainly interested in $L_2$ robustness, 
and in our example we show that the gap in samples required could potentially be much larger, and we might require $\Omega(2^d)$ training examples. 

The work of \cite{schmidt_2018_more_data} has been extended by \cite{bhagoji2019lower} and \cite{dan2020sharp}, where the authors also consider different $L_p$ norms, and prove some bounds about the excess risk. We are more interested in distribution where the optimal robust classifier actually achieves $0$ risk, and furthermore we think that the convergence rate to this optimal risk is not that important, but the sample complexity of getting within (\eg) $1\%$ of the optimal risk is a more useful quantity to study.

Other works have also produced results about robust generalization. For example, in \cite{raghunathan_2019} the authors introduce a data distribution where adversarial training can hurt the test performance of a (regression) model. The authors also argue that we do not actually need labeled data in order to improve performance. As long as we have unlabeled data, we can use a standard network to produce pseudo-labels, 
and (as long as the data is not adversarial) those labels should be fairly accurate.
In \cite{min_2021} the authors show that for Gaussian data the test loss of a linear classifier might actually get worse with additional data, or might show some double descent behavior.
In \cite{bhattacharjee_2021} the authors consider data distribution where an perfectly accurate robust classifier exists. They show that in this scenario, learning a robust classifier with maximal possible margin can need $d$ times more samples. We show in our paper that robust classification can be hard not just when aiming for maximum possible margin, but also when the goal is robustness to smaller perturbations.

A different attempt of explaining the lack of performance of current robust models is by blaming it on the \textbf{robustness-accuracy trade-off} \cite{tsipras_2019}. It has been observed theoretically as well as empirically that on certain data distributions a classifier can be either very accurate or reasonably robust, but not both \cite{tsipras_2019,raghunathan_2020_understanding,zhang_2019,dohmatob_2018_limitations,bethune_2022_misconceptions}. Whilst this trade-off offers interesting insights in general, we think it is not the most promising way to study the lack of performance in image classification tasks, where a classifier that is both accurate and robust at the same time does exist.

Other authors argue that robust classification might require much more complex models, where complexity can refer to the hypothesis class \cite{fawzi_2018,odds_2019_nakkiran,prach_2023_nact}, the amount of compute required \cite{degwekar_2019_computational,bubeck_2019_adversarial} or the size of the model required \cite{bubeck_2021,li_2022_robust}. 
The distributions used to prove results are often similar to our example distribution, there is a map that is hard to learn (\eg from the hypercube to the label), but the data comes with an additional feature of small magnitude that allows us read off the label. 
These results are further related to our work as in order to use exponentially (in $d$) many examples, one definitely also requires compute exponentially in $d$, at either training (\eg for a neural network) or at inference time (\eg for a 1-nearest neighbor classifier). So our result also implies the result that on certain distributions we do require exponential amount of compute.

Another related concept are \textbf{robust and non-robust features}  \cite{ilyas_2019_not_bugs_features}. 
The authors introduce the idea that
adversarial examples might not directly be artifacts of the way we train the models, but exist because of the data distribution. Furthermore, they exist because of features that are useful and generalize well but are not robust. 
In \cite{ilyas_2019_not_bugs_features} the authors use a very general definition of features, and consider any map from the input space to the real numbers a feature. 
We believe this definition is too general to give us insights about the datasets. 
We show that even linear subspaces of the input space exist with tiny variance, and yet projecting into these subspaces still allows achieving great (non-robust) performance.
%

There is also a recent piece of work \cite{bartoldson2024adversarial} that studied \textbf{robust scaling laws}, however they consider perturbations with bounded $L_\infty$ norm. In their setting they concluded that (with current techniques) it will require unreasonable amounts of compute 
(much more than to train recent LLMs) to match human performance on CIFAR-10. 

\section{Conclusion}
Even $10$ years after adversarial examples have entered the community's attention, robust classification is far from solved. Furthermore it is also not clear what makes the problem of robust classification so hard, and we still struggle on very simple datasets with robustness to fairly small perturbations.
%
%
In our paper we have aimed to collect theoretical facts and empirical evidence about robust classification, in particular about robust generalization, 
in order to
give the field a better understanding of the phenomena.

We first showed that there are data distributions on which it is not possible to train a robust classifier, unless the amount of data is unreasonably large. Moreover, this can be the case even for distribution where we can easily learn a good (non-robust) classifier, and where a perfect robust classifier exists.
%

Based on this insight, we evaluated whether similar results also hold on real data. We showed that on popular datasets including CIFAR-10, the performance of current models seems to be mainly determined by the amount of training data. Furthermore, we showed that as in our theoretical example, CIFAR-10 does have low-magnitude directions that cannot be used for robust classification, yet they are useful for training a standard classifier. 

Finally, we showed that the lack of performance of 1-Lipschitz classifiers is not a consequence of the architecture. In particular, 1-Lipschitz models are expressive enough to fit the training data very robustly. Furthermore, 1-Lipschitz architecture are also able to do (non-robust) generalization very well and the same architecture can be trained either to good test accuracy, or to good \cra based on the choice of loss function. We just currently fail do both (robust fitting and generalizing) at the same.

Overall we highlighted how important the amount of data is for training a robust classifier. 
We hope that future research will further explore scaling up datasets and classifiers, and that awareness of the intriguing properties of robust classification we presented will allow the community to create better robust image classifiers in the future.


{
    \small
    \bibliographystyle{ieeenat_fullname}
    \bibliography{references}
}


\clearpage

\setcounter{page}{1}
\maketitlesupplementary

\section{Proof of Theorem 2} \label{sec:proof2}
In this section we prove \Cref{thm:upperbound}. Recall
\upperbound*

\begin{proof}
    In order to prove this theorem we will show that in this setting, the 1-nearest neighbor algorithm achieves robust accuracy of at least $99\%$.
	
    In order to show this, we first show that for any test point, the nearest point of a different class is at least an $L_\infty$ distance of $2\delta$ away. Assume that $f$ is a robust classifier on $\Dphi$. Consider a test point $\vec{x}$ and the nearest training point $\vec{x_j}$ that is of a different class.  We know that $f$ robustly classifies both $\vec{x}$ and $\vec{x_j}$ with radius $\delta$. This implies that any point of distance $\le \delta$ to either of the points must share a label with that point, and therefore $\vec{x}$ and $\vec{x_j}$ must be at least $2\delta$ apart.
	
    For a test point $\vec{x}$ with label $y$, suppose there exists a training point $\vec{x_i}$ that is less than $\delta$ away from $\vec{x}$. By our assumption, this training point also has label $y$. Consider any other point $\vec{\tilde{x}}$ with $\|\vec{\tilde{x}} - \vec{x}\|_\infty \le \delta/2$. Furthermore, consider any training point $\vec{x_j}$ of a different class than $\vec{x}$. Using the triangle inequality we get that 
    \begin{align}
		&\|\vec{\tilde{x}} - \vec{x_i}\|_\infty 
		\le \|\vec{\tilde{x}} - \vec{x}\|_\infty + \|\vec{x} - \vec{x_i}\|_\infty
		< \frac{3\delta}{2}
		\\
		&\|\vec{\tilde{x}} - \vec{x_j}\|_\infty 
		\ge \|\vec{x} - \vec{x_j}\|_\infty - \|\vec{\tilde{x}} - \vec{x}\|_\infty
		\ge \frac{3\delta}{2}.
\end{align}
	Therefore, we know that the nearest training point to $\vec{\tilde{x}}$ must have label $y$. 
	This shows that the \onn is robust to perturbation of size $\le \delta/2$ of $\vec{x}$.

	With this established it just remains to be shown that with enough training examples,
	for $99\%$ of test points $\vec{x}$, there will be a training point close to $\vec{x}$.
	In order to prove that this is the case,  we will split the hypercube into a set of disjoint boxes. The probability of a test point being close to a training point is at least as big as the probability of the test point being in a box that has at least one training point inside.
	We define the boxes by defining a set of \emph{box centers}: For $D = \lceil 1/\delta \rceil$, set
	${\mathcal{C} = [\frac12 \delta, \frac32 \delta, \dots, \frac{2D-1}2\delta]^d}$.
	We define $B_r(C)$ as the $L_\infty$ ball with radius $r$ around $C$. 
	Further, we set $\mathcal{B}$ to be the set of all boxes, $\mathcal{B} = \{B_{\delta/2}(C): C \in \mathcal{C}\}$. We have that $|\mathcal{B}| = D^d = \lceil 1/\delta \rceil^d$.
	We will write $p_B$ for the probability of a data point lying in box $B$ under distibution $\mathcal{D}$. With this, we can write the probability of having at least $1$ training point in box $B$ as
	\begin{align}
		\Prob(\exists i: \vec{x_i} \in B)
		&= 1 - \Prob(\vec{x_i} \not\in B \ \forall i) \\
		&= 1 - \prod_{i=1}^{n} \left(1 - \Prob(\vec{x_i} \in B) \right) \\
		&= 1 - \left( 1 - p_B \right)^n.
	\end{align}
	We further have that $(1-p)^n = (1-p)^{\frac1pnp} \le \exp(-np)$, and therefore 
	\begin{align}
		\Prob(\exists i: \vec{x_i} \in B) \ge 1 - \exp(-n p_B).
	\end{align}
	
	We will also use that $\exp(x) \ge 1+x$ for all $x$, and therefore $\exp(x-1) \ge x$, and
	\begin{align}
		x\exp(-x) \le \exp(-1).
	\end{align}

	Then, putting everything together, for $p$ the probability that a test point $\vec{x}$ is classified robustly we have that:
	\newcommand{\pbox}{p_\text{box}}
	\begin{align}
		p
		&\ge \Prob\left(\min_i \| \vec{x} - \vec{x_i} \|_\infty \le \delta \right) \\
		&\ge \sum_{B\in\mathcal{B}} \Prob(\vec{x} \in B) \Prob(\exists i: \vec{x_i} \in B) \\
		&\ge \sum_{B\in\mathcal{B}} p_B (1 - \exp(-n p_B)) \\
		&= 1 - \sum_{B\in\mathcal{B}} p_B \exp(-n p_B) \\
		&= 1 - \frac1n \sum_{B\in\mathcal{B}} n p_B \exp(-n p_B) \\
		&\ge 1 - \frac{1}{n} \sum_{B\in\mathcal{B}} \frac{1}{e} \\
		&= 1 - \frac{|\mathcal{B}|}{ne}
	\end{align}
	Now since $|\mathcal{B}| = \lceil 1/\delta \rceil^d$ and we assumed that $n \ge 37 \lceil 1/\delta \rceil^d$ we get that $p \ge 99\%$.  
\end{proof}

Note that we can adapt the proof to work for any margin $\delta' < \delta$, and not just for $\delta/2$. Furthermore, if we only assume $L_2$ robustness we might need many more data points, namely $\mathcal{O}(c^d d^{d/2})$ for some constant $c$.

\section{Additional scaling law results} \label{sec:additional-scaling}
In this section we provide additional results for \Cref{sec:scaling-laws}.



\subsection{Scaling up compute}
First we explore the question of whether increasing the amount of compute alone can have a positive effect similar to the one we observed when increasing the size of the dataset. The answer seems to be no. We analyze for 3 different models how they scale with compute when leaving the dataset size fixed. The results are visualized in \Cref{fig:scaling-compute}. Note that the x-axis is in log scale. Doubling the dataset size improves the performance less and less, and the curves for both accuracy and \cra flatten with increasing training epochs.
\begin{figure}
    \centering
    \includegraphics[width=0.9\linewidth]{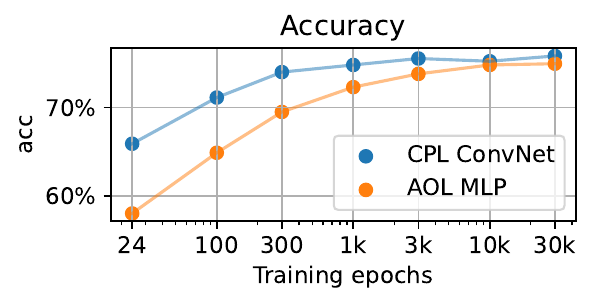}
    \includegraphics[width=0.9\linewidth]{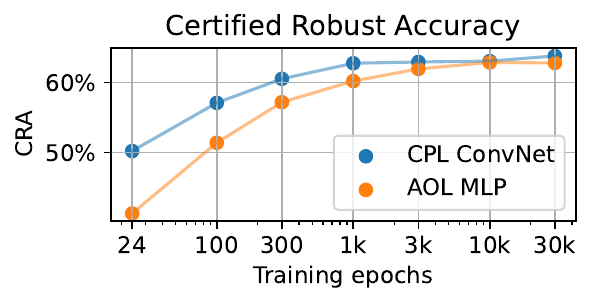}
    \caption{Scaling up the compute.}
    \label{fig:scaling-compute}
\end{figure}

\subsection{Training on additional data}
We are also interested whether the scaling law from \Cref{fig:subsampling} also extends to larger training datasets, larger than the $50k$ images from the CIFAR-10 training dataset itself. Recently, there has been a lot of works that used data generated in the style of CIFAR-10 by a diffusion model \cite{gowal2021improving,wang2023better,altstidl2023raising,hu_2024_unlocking,hu_2024_recipe}. 

We also followed this approach, and we used $1$ million images from \cite{wang2023better}. We subsampled this large dataset to obtain training sets of different sizes. For all experiments we set the number of epochs to $3000$, so we did use more compute with larger datasets this time.

In addition to our own results, we also report the performance of the current best 1-Lipschitz model \cite{hu_2024_recipe}. The authors generated $1$ million additional CIFAR-10 style images with a diffusion model, using a model trained on $940$ million images for data filtering. Training with this additional data allows them to achieve $78.1\%$ \cra.

We show the results in \Cref{fig:edm-scaling}. The scaling behavior from \Cref{fig:subsampling} extends to larger datasets sizes and generated data for all models considered. However, the improvements from training 1-Lipschitz models naively on more data, without increasing the model size eventually diminish. The results from related work that carefully designs the training setup to maximize test performance show that it is possible to extend the improvement. This effect might also in part be due to the lower quality of generated data.
For randomized smoothing our estimate of performance does scale very well with the amount of data.


\begin{figure}
    \centering
    \includegraphics[width=0.9\linewidth]{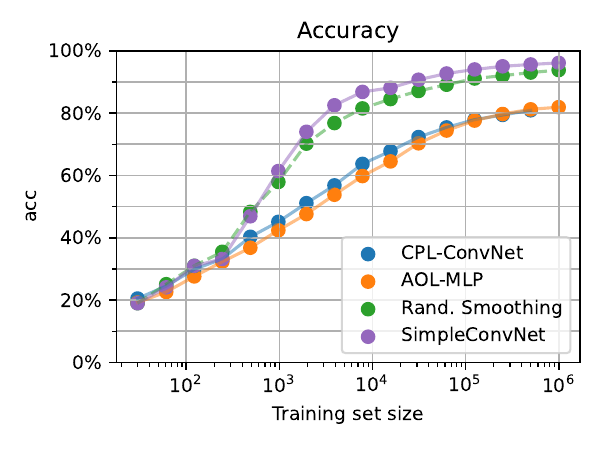}
    \includegraphics[width=0.9\linewidth]{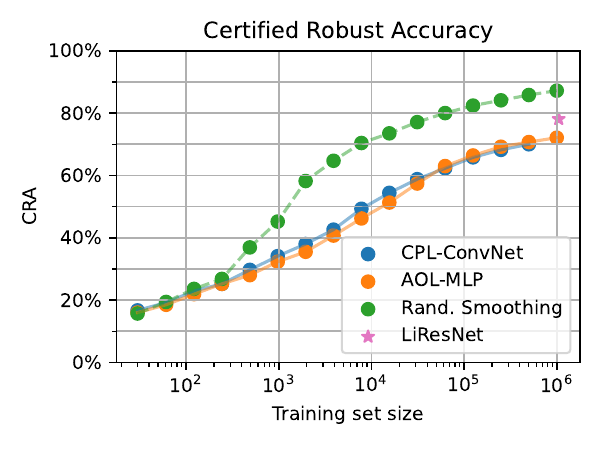}
    \caption{Scaling the size of the training data up by using additional data.}
    \label{fig:edm-scaling}
\end{figure}

\subsection{Setup for MNIST experiments}
Here we describe the setup we used for the additional results on MNIST in \Cref{fig:scaling-behavior-other}. In order to be able to keep most of our setup the same, we padded the image (after subtracting the mean value) with value $0$ to size $32\times32$. We reduced the size of our models slightly: we used $16$ instead of $64$ base channels for the \mbox{ConvNet}, and width $1024$ instead of $3072$ for the MLPs. We also simplified the augmentation a bit, and used only random cropping (size $4$) and random flipping.





\subsection{Why linear-log?}
It is very interesting that the performance in \Cref{fig:subsampling} increases approximately linearly in the logarithm of the training datasets size. Whilst we do not know why this is the case we do want to provide two intuitions to the reader.

First, if we assume that the performance of a classifier on a test point depends only on a constant number of "useful" examples (\eg the k-nearest neighbors), then the probability that an additional training example is "useful" for a test point is $\landauO(1/n)$. We also have that 
$\sum_{i=1}^{n} \frac{1}{i} \sim \ln(n) + \gamma$, where $\gamma\sim0.577$ is the Euler-Mascheroni constant,  
which might be the reason why we see this linear-log behavior.

\newcommand{\id}{d^*}
Second, when we consider the minimum distance of a test point to a training point, this distance should behave approximately proportional to $n^{-\frac{1}{\id}}$ for some $\id \le d$ (for details see \Cref{sec:linearlogx}). When $n \ll \exp(\id)$, this term is approximately equal to $1 - \frac{\log(n)}{\id}$,
so the distance to the nearest neighbor is approximately linear in $\log(n)$.
If our classifier improves (about linearly) as the nearest training examples get closer to the test points, the observation above would explain the scaling law we observe.
We provide more details as well as some experimental evidence in \Cref{sec:linearlogx}.

\section{Robust and non-robust features} \label{sec:x-features}
\newcommand{\lb}[1]{\makebox[8mm][r]{#1}}
\newcommand{\rb}[1]{\makebox[8mm][l]{#1}}
\begin{table}
    \centering

\begin{tabular}{c | >{$}c<{$} | >{$}c<{$}> {$}c<{$} | >{$}c<{$}> {$}c<{$}}
    \toprule
    PCs     & \multicolumn{1}{c|}{Var.}      & \multicolumn{2}{c|}{Accuracy \%} & \multicolumn{2}{c}{CRA \%}  \\
            & \multicolumn{1}{c|}{Expl. \%}     & \multicolumn{1}{c}{Train}         & \multicolumn{1}{c|}{Test}           & \multicolumn{1}{c}{Train}         & \multicolumn{1}{c}{Test}       \\
    \midrule
    \lb{1}-\rb{16}       & 72    & 99 & 43 & 64 & 31 \\
    \lb{1}-\rb{512}      & 98    & 100 & 91 & 89 & 61 \\
    \midrule
    \lb{513}-\rb{3072}   & 2     & 100 & 85 & 9 & 9 \\
    \lb{2049}-\rb{3072}  & 0.02  & 99 & 39 & 0 & 0 \\
    \midrule
    \lb{1}-\rb{16 \&}    & \multirow{2}{*}{$74$} & \multirow{2}{*}{$100$} & \multirow{2}{*}{$86$} & \multirow{2}{*}{$65$} & \multirow{2}{*}{$35$} \\
    \lb{513}-\rb{3072} &&&&& \\ %
    \midrule
    \lb{1}-\rb{3072}       & 100    &  100 & 94        & 93 &  62     \\
    \bottomrule
\end{tabular}

    \caption{Performance on different subsets of the principal components, as well as the proportion of variance explained by it. 
    }
    \label{tab:pca-results}
\end{table}

In this section we provide additional visualizations for \Cref{sec:features}.
%
For the performance on additional subsets of principal components see \Cref{tab:pca-results} and \Cref{fig:pca-bars-extended}.

%

In order to evaluate the capabilities of the models to overfit the training data projected onto different subsets of principal components, we repeated the experiments from \Cref{sec:features} without data augmentation. The results are shown in \Cref{fig:pca-bars-no-aug}. Note that we can robustly overfit the training data, even when projected on just the first $16$ principal components.

\begin{figure}
	\centering
	\includegraphics[width=0.9\linewidth]{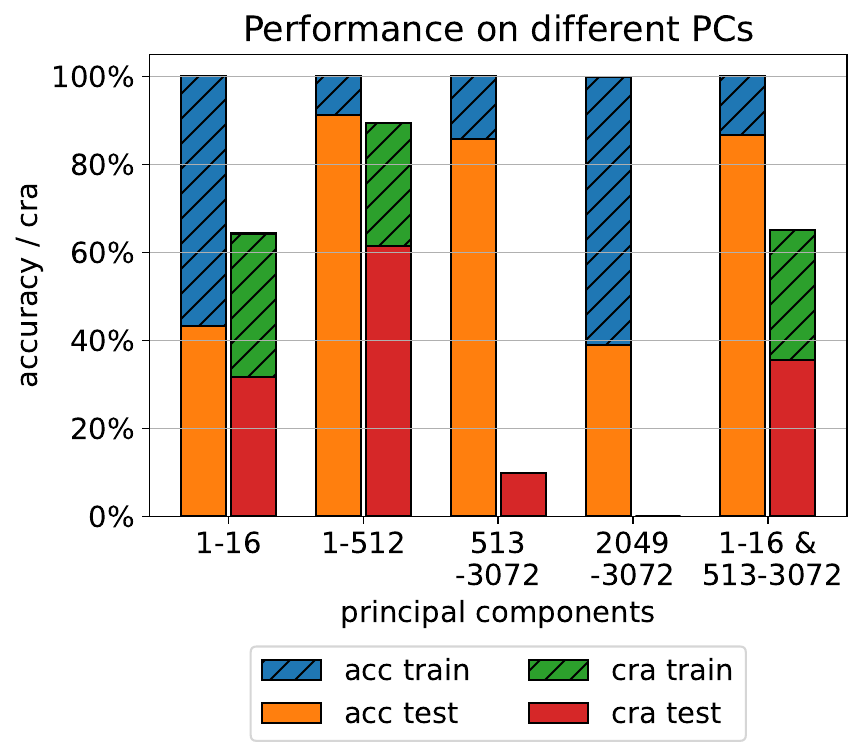}
	\caption{Performance on different subsets of the principal components.}
	\label{fig:pca-bars-extended}
\end{figure}

\begin{figure}
    \centering
    \includegraphics[width=0.9\linewidth]{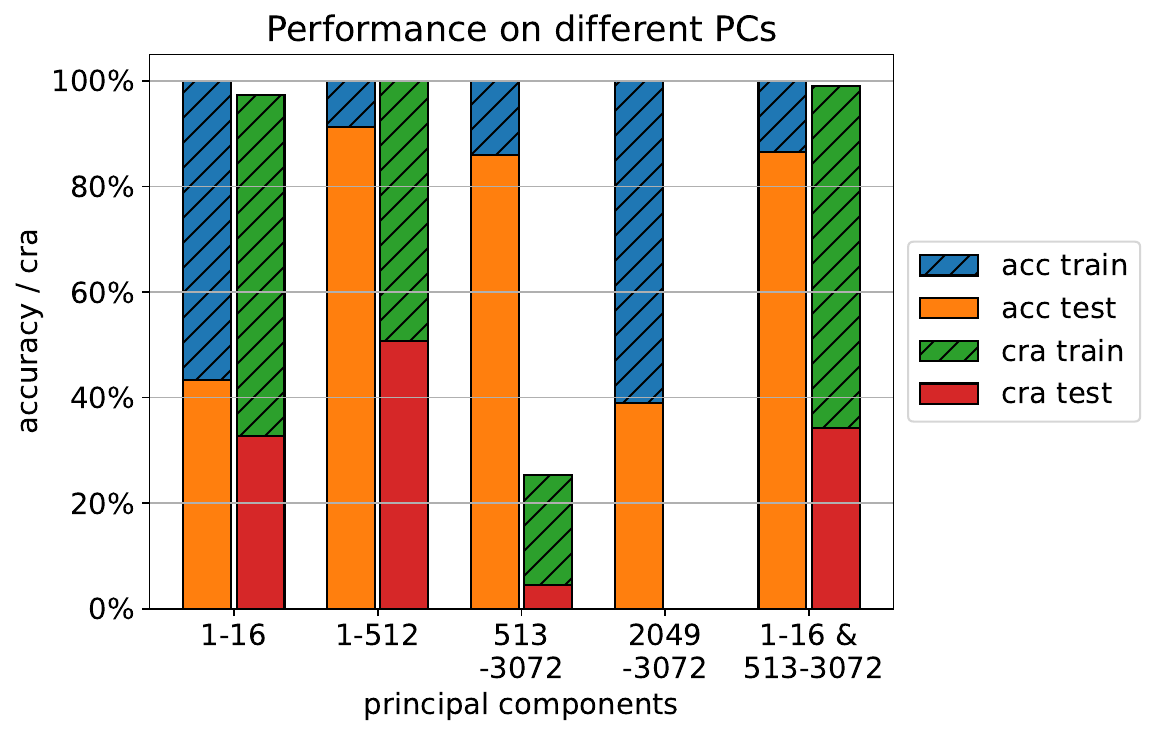}
    \caption{Performance of models when projected to a subset of principal components. Here, the robust models were trained without data augmentation.}
    \label{fig:pca-bars-no-aug}
\end{figure}

\section{Robustness-accuracy trade-off} \label{sec:x-rat}

In this final experimental section we want to explore whether it is the model architecture that prevents robust models from generalizing. Often the architecture, layers, and the training pipeline in general is different depending on whether accuracy or robust accuracy is the goal metric. Therefore, we want to explore whether the change in architecture is responsible for the lack of generalization in robust networks. 
In order to prevent vanishing gradients and other problems when training 1-Lipschitz networks, there are a few important adaptation from standard convolutional networks. 
For example, in 1-Lipschitz model we use different activation layers, not MaxPooling, no BatchNorm and different initialization strategies, see also \Cref{tab:arch-diff}.

In order to evaluate whether the difference in architecture (\eg the lack of global pooling in 1-Lipschitz networks) is the reason for the worse generalization, we carefully constructed a single architecture that can reach competitive accuracy and competitive \cra.

Among all differences between the architectures, we have found that initialization and batch normalization cause most of the a trade-off between accuracy and \cra, especially when training for a lower number of epochs.
For initialization, it seems that identity or near-identity initialization is very useful for 1-Lipschitz networks \cite{lot,compared_2024_prach}, whereas great accuracy requires some random initialization (\eg \emph{Kaiming uniform} \cite{he_2016_kaiming} or orthogonal). For the experiments in this section we used     orthogonal initialization.
Getting rid of the batch normalization is slightly trickier when using 1-Lipschitz layers. However, it turns out we can use a single normalization layer applied to the output of the model to enable training to good accuracy. We furthermore can fold this normalization into our loss function, so that we can train the identical model to good accuracy and (with a different loss function) to good \cra.

In order to smoothly interpolate between the two setting we introduce a loss function with a trade-off parameter $t$. It aims to be a version of the \emph{OffsetCrossEntropy} \cite{aol_2022_prach}, with additional normalization. We name the loss function \emph{SelfNormalizingCrossEntropy} 
and define it as:
\begin{align}
    \operatorname{CrossEntropy}\left(
    \operatorname{Softmax}\left(
        \frac{s}{\operatorname{std}(s) + t} - y
    \right),\ y \right),
\end{align}
where $s$ is the vector of scores predicted by the model, $y$ is a one-hot encoding of the label and $\operatorname{std}(s)$ denotes the standard deviation of $s$.

We used this loss function to train a set of models that includes ones with good accuracy and some with good \cra. For results see \Cref{fig:trade-off} and \Cref{tab:trade-off}.
%
Setting $t=0$ we can obtain $93.2\%$ accuracy with this model, showing that the model itself allows to generalize comparably to traditional ConvNets, 
and we do not require layers such as MaxPooling for generalisation.
With $t=\frac{1}{10}$ we can train the model to $61.7\%$ \cra.
This shows that we can train the same architecture to competitive accuracy or competitive \cra by only changing the loss function,
and therefore that the architectural restrictions of 1-Lipschitz models are not the reason why those models fail to generalize well. 

Interestingly, with our setup, there is no parameter value that is good for both tasks, but we do see a clear accuracy-robustness trade-off \cite{tsipras_2019}, as observed in the literature before.

\begin{table}
    \centering
    \begin{tabular}{c|cc}
        \toprule
        ConvNets:       & Standard          & 1-Lipschitz       \\
        \midrule
        Activation      & ReLU              & MaxMin            \\
        Blocks          & 3                 & 5                 \\
        Global Pooling  & MaxPooling        & None              \\
        Local Pooling   & MaxPooling        & PixelUnshuffle    \\
        Normalization   & BatchNorm         & None              \\
        Convolution     & Standard          & 1-Lipschitz       \\
        Linear Layer    & Standard          & 1-Lipschitz       \\
        Initialization  & Random            & Identity Map      \\
        \bottomrule
    \end{tabular}
    \caption{Difference in architecture of a SimpleConvNet and a standard 1-Lipschitz ConvNet.}
    \label{tab:arch-diff}
\end{table}

\begin{figure}
    \centering
    \includegraphics[width=0.9\linewidth]{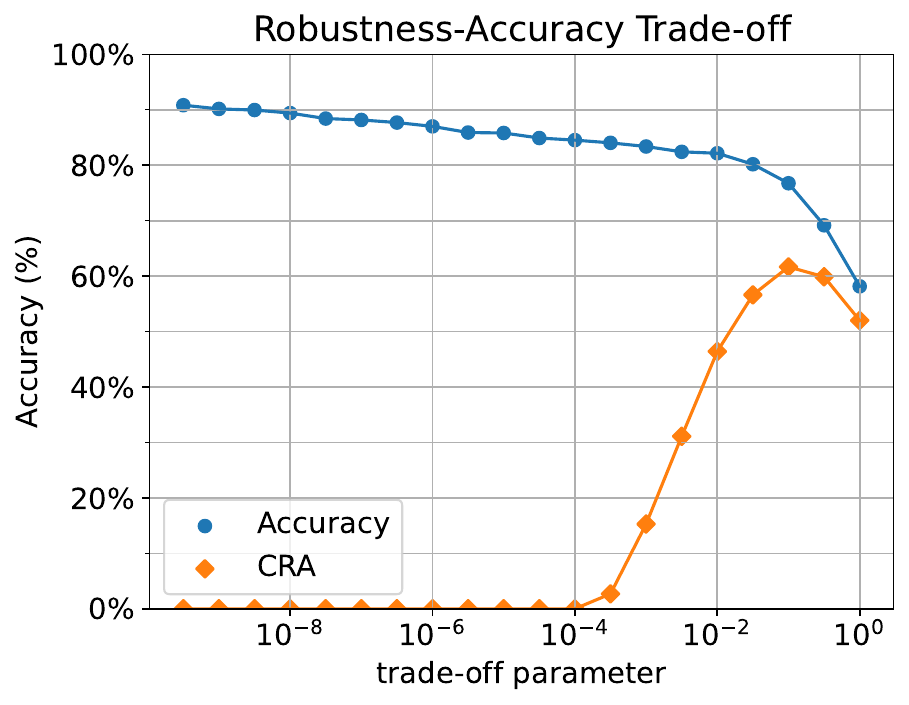}
    \caption{The same model can reach good accuracy as well as good \cra.}
    \label{fig:trade-off}
\end{figure}

\begin{table}
    \centering
    \begin{tabular}{c|cc}
        \toprule
        t & Acc & CRA \\
        \midrule
        0 & 93.2\% & 0.0\% \\
        $\frac{1}{10}$ & 76.8\% & 61.7\% \\
        \bottomrule
    \end{tabular}
    \caption{
        We can get high accuracy or good \cra with the same setup, only by changing the value of trade-off parameter $t$ in the loss function.
    }
    \label{tab:trade-off}
\end{table}

\section{Details for linear-log behavior} \label{sec:linearlogx}

In \Cref{fig:subsampling} it seems that the \cra depends on the logarithm of the size of the training set almost in a linear way. In this section we want to explore why this might be the case.

For a first intuition, we will consider the scenario where the performance of an architecture on every test example depends only on a few training examples. We think of those as \eg the $k$-nearest neighbors or some support vectors.
In this case, a new training data point can only have an influence on the performance (for a particular test point) if it is part of those few examples. The chance that this is the case for the $n^\text{nt}$ training example added to the training set is $\landauO(\frac1n)$. Therefore, the amount of times \eg a $k$-nearest neighbor classifier could be improved by adding an additional training example is $\landauO(\frac1n)$, and the total amount of times it might be improved when increasing the training set size from $n_1$ to $n_2$ is of order
\begin{align}
    \sum_{i=1+n_1}^{n_2} \frac1i \sim \ln(n_2) - \ln(n_1) = \ln\left(\frac{n2}{n1}\right).
\end{align}
If the amount of improvement does not increase with dataset size, this gives us an upper bound on the performance: For some value $c$, doubling the dataset size should at best increase the performance by $c$. This is in line with the behavior we observe in \Cref{fig:subsampling}. \medskip

We can also analyze this behavior in terms of distance to the nearest neighbor: We assume that for image datasets, for some dimension $d^*$ (something like an "intrinsic dimension of the data"), it should hold that the probability $p$ of a (test) data point being within radius $r$ of another data point approximately follows $p \sim c r^{d^*}$ for come value $c$.
We want to use this to make statements about the median of the distribution of the distance to the nearest neighbor, which we call $r^*$.
In order to do this, consider the probability $p_n$ that any of $n$ datapoints is close to the test point. We have that
\begin{align}
    p_n = 1 - (1-p)^n \le np
\end{align}
and 
\begin{align}
    p_n = 1 - (1-p)^n \ge 1 - \exp(-np).
\end{align}
Now if we set $r=r_l$ for
\begin{align} \label{eq:small-r}
    r_l = \left( \frac{1}{2cn} \right) ^ \frac{1}{d^*},
\end{align}
we get that $p_n \le np = nc \frac{1}{2nc} = \frac12$, and therefore $r^* \ge r_l$.
Similarly, for 
\begin{align} \label{eq:big-r}
    r_u = \left( \frac{1}{cn} \right) ^ \frac{1}{d^*},
\end{align}
we get that $p_n \ge 1 - \exp(-np) = 1 - \frac{1}{e} > \frac12$, and therefore $r^* \le r_u$.
Putting these together we have that
\begin{align} \label{eq:both-r}
    \left( \frac{1}{2cn} \right) ^ \frac{1}{d^*}
    \le r^*
    \le \left( \frac{1}{cn} \right) ^ \frac{1}{d^*}.
\end{align}

Furthermore note that as long as  $\log n \ll d^*$, the following approximation should be close:
\begin{align}
    \left( \frac1n \right)^{\frac1{d^*}} = \exp\left( - \frac{\log(n)}{d^*}\right)
    \sim 1 - \frac{\log(n)}{d^*}.
\end{align}
Therefore, for some $C$ it should approximately hold that 
\begin{align}
    C \left(1 - \frac{\log(n)}{d^*}\right) \le r^* \le C \left(1 - \frac{\log(2n)}{d^*}\right),
\end{align}
which does imply that $r^*$ will behave approximately linearly in $\log(n)$ as long as $\log(n) \ll d^*$



We evaluated whether this relationship does hold on CIFAR-10. Our results are shown in \Cref{fig:onn-distances}, where we show that indeed the distance to the nearest neighbor does behave similarly to what we expect from the theoretical analysis.
%
If it is further the case that the (expected) \cra of a classifier increases when a test point is closer to the training point, this would explain why the performance of this classifier scales about linearly with the logarithm of the dataset size. 
Note that at least when using the angular distance, it seems that on average 1-Lipschitz classifiers do better on test examples with a nearby training example.
%
%

When estimating $d^*$ from the experimental data in \Cref{fig:onn-distances}, we get an "intrinsic dimension" of about $d^* \sim 28$. This unfortunately implies that in order to get 1-nearest neighbors of distance close to $1$ we will require $n \ge 10^{31}$. So while the 1-nearest neighbor algorithm will produce a great robust classifier with enough data, this amount of data does not seem to be reachable in practice.

\begin{figure}
    \centering
    \includegraphics[width=0.9\linewidth]{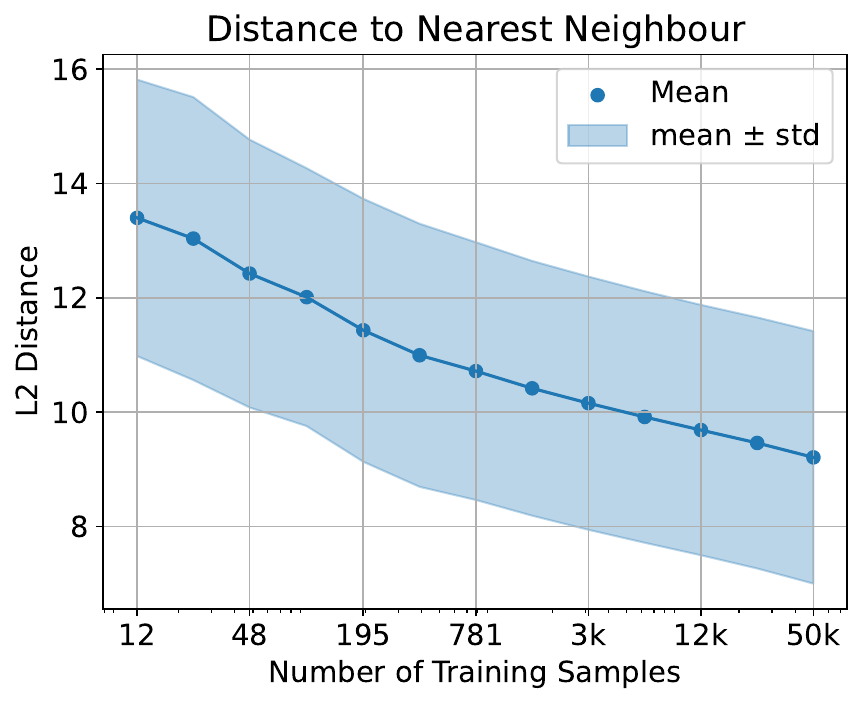}
    \caption{The distance to the nearest neighbor 
    scales about linearly in $\log(n)$, for $n$ the size of the training dataset.
    }
    \label{fig:onn-distances}
\end{figure}



\end{document}